\def \R {\mathbb{R}}
\def \f {\phi}
\def \AA {\forall}
\def \EE {\exists}
\def \AA {\forall}
\begin{document}

\ArticleType{RESEARCH PAPER}
\Year{2017}
\Month{January}
\Vol{60}
\No{1}
\DOI{}
\ArtNo{}
\ReceiveDate{}
\AcceptDate{}
\OnlineDate{}

\title{On Convergence Property of Implicit Self-paced Objective}{On Convergence Property of Implicit Self-paced Objective}


\author[1]{Zilu Ma}{}
\author[1]{Shiqi Liu}{}
\author[1]{Deyu Meng}{{dymeng@mail.xjtu.edu.cn }}


\AuthorMark{Zilu Ma, Shiqi Liu, Deyu Meng}

\AuthorCitation{Zilu Ma, Shiqi Liu, Deyu Meng, et al}


\address[1]{Institute for Information and System Sciences and Ministry of Education Key Lab of Intelligent Networks \\and Network Security, Xi'an Jiaotong University, Xi'an {\rm 710049}, China}

\abstract{ Self-paced learning (SPL) is a new methodology that simulates the learning principle of humans/animals to start learning easier aspects of a learning task, and then gradually take more complex examples into training. This new-coming learning regime has been empirically substantiated to be effective in various computer vision and pattern recognition tasks. Recently, it has been proved that the SPL regime has a close relationship to a implicit self-paced objective function. While this implicit objective could provide helpful interpretations to the effectiveness, especially the robustness, insights under the SPL paradigms, there are still no theoretical results strictly proved to verify such relationship. To this issue, in this paper, we provide some convergence results on this implicit objective of SPL. Specifically, we prove that the learning process of SPL always converges to critical points of this implicit objective under some mild conditions. This result verifies the intrinsic relationship between SPL and this implicit objective, and makes the previous robustness analysis on SPL complete and theoretically rational.}

\keywords{Self-paced learning, machine learning, non-convex optimization, convergence}

\maketitle

\section{Introduction}


Self-paced learning (SPL) is a recently raised methodology designed through simulating the learning principle of humans/animals~\cite{kumar2010self}. A variety of SPL realization schemes have been designed, and empirically substantiated to be effective in different computer vision and pattern recognition tasks, such as object detector adaptation \cite{SPShift}, specific-class segmentation learning \cite{SPSegmentation}, visual category discovery \cite{SPVCD}, concept learning \cite{WELL}, long-term tracking \cite{SPTrack}, graph trimming\cite{Yue2016Semi}, co-saliency detection\cite{Zhang2015A}, matrix factorization\cite{Zhao2015Self}, face identification\cite{Lin2017Active}, and multimedia event detection~\cite{MED14}.

To explain the underlying effectiveness mechanism inside SPL, \cite{2015arXiv151106049M} firstly provided some new theoretical understandings under the SPL scheme.
Specifically, this work proved that the alternative optimization strategy (AOS) on SPL accords with a majorization minimization (MM) algorithm implemented on an implicit objective function.
Furthermore, it is found that the loss function contained in this implicit objective has a similar configuration with non-convex regularized penalty (NCRP), leading to a rational interpretation to the robustness insight under SPL.

However, such understanding is still not theoretically strict. The theory in \cite{2015arXiv151106049M} can only guarantee that during the iterations of SPL solving process (i.e., the MM algorithm), the implicit objective is monotonically decreasing, while cannot prove any convergence results on this implicit objective theoretically. However, this theoretical result regarding this implicit objective is critical to the soundness of the robustness insight explanation of SPL, which guarantees to settle the convergence point of the algorithm down on the expected implicit objective, and intrinsically relate the original SPL model and this implicit objective.

To this theoretical issue of SPL, in this paper, we prove that the optimization of the implicit objective actually converges to critical points of original SPL problem under satisfactorily weak conditions. This result provides an affirmative answer to our guess that the SPL intrinsically optimizes a robust implicit objective.

In what follows, we will first introduce some related background of this research, and then we provide the main theoretical result of this work.

\section{Related work}
In this section, we first briefly introduce the definition of SPL, and then provides its relationship to the implicit objective of NCRP.

\subsection{The SPL objective}
Given training data set $\{(x_i,y_i)\}_{i=1}^N$,
many machine learning problems need to minimizing the following form of objective function:
\begin{equation*}
	J(w) = \phi_\lambda(w) + \sum_{i=1}^N L(y_i, g(x_i,w)),
\end{equation*}
where $w\in \R^D$ is variables to be solved, $\phi_\lambda$ is a regularizer parameter, $L$ is the loss function and $g(\cdot,w)$ is the parametrized learning machine, like a discriminative or a regression function.

To improve the robustness, specially avoiding the negative influence brought by large-noise-outliers, SPL imposes additional importance weights $v=(v_1,\cdots,v_n)$ to loss functions of all samples, adjusted by a self-paced regularizer (SP-regularizer). Here,
each $v_i\in [0,1]$ represents how much extent the sample $(x_i,y_i)$ will be trained in the learning process.
The \textbf{self-paced objective} can then be designed as \cite{SPaR}:
\begin{equation}
	E(w,v;\lambda)= \phi_\lambda(w) + \sum_{i=1}^N v_i L(y_i, g(x_i,w)) + f_\lambda(v_i),\label{eq2}
\end{equation}
where $f$ is the \textbf{SP-regularizer}, satisfying the following conditions:
\begin{enumerate}
  \item $v\mapsto f_\lambda(v)$ is convex on $[0,1]$;
  \item Let
 	\begin{equation} \nonumber
 		v_\lambda^*(l) = \arg \min_{v\in [0,1]} \left\{ vl+f_\lambda(v) \right\},
 	\end{equation}
 	then $l\mapsto v_\lambda^*(l)$ is non-increasing, and
 	\begin{equation} \nonumber
 		\lim_{l\to 0} v_\lambda^*(l)=1,\quad \lim_{l\to \infty} v_\lambda^*(l)=0;
 	\end{equation}
  \item $\lambda \mapsto v_\lambda^*(l)$ is non-decreasing, and
 	\begin{equation} \nonumber
 		\lim_{\lambda\to 0} v_\lambda^*(l)=0,\quad \lim_{\lambda\to \infty} v_\lambda^*(l)\leq 1.
 	\end{equation}
\end{enumerate}
Throughout this paper, we shall assume that $v_\lambda^*(l)$ can be uniquely determined and thus can be seen as
a real-valued function instead of a set-valued function.

The three conditions in the definition above provide basic principles for constructing a SP-regularizer. Condition 2 indicates that the model inclines to select easy samples (with smaller losses) in favor of complex samples (with larger losses). Condition 3 states that when the model ``pace" (controlled by the pace parameter $\lambda$) gets larger, it tends to incorporate more, probably complex, samples to train a ``mature" model. The convexity in Condition 1 further ensures the soundness of this regularizer for optimization.

The existence of the SP-regularizer can be illustrated by the following example.

Let the SP-regularizer be\begin{equation} \nonumber f_\lambda(v)=\lambda v(\log v-1),
\end{equation} then it yields
\begin{equation} \nonumber
v_\lambda^*(l)=e^{-\lambda^{-1}l}.
\end{equation}
It is easy to verify that $v_\lambda^*(l)$ satisfies the above conditions.

In the following, we shall write:
\begin{equation} \nonumber
	l_i(w) = L(y_i, g(x_i,w)), \quad i=1,\cdots,N
\end{equation}
for simplicity.

\subsection{The implicit NCRP objective}
Let
\begin{equation} \nonumber
	F_\lambda(l)=\int_0^l v_\lambda^*(\tau) d\tau.
\end{equation}
Since $v_\lambda^*$ is non-increasing, the set of its discontinuous points is countable and consists only of jump discontinuity.
Thus $v_\lambda^*$ is integrable and $F_\lambda$ is absolutely continuous and concave.
We now define
\begin{equation}
	G_\lambda(w) = \f_\lambda(w) + \sum_{i=1}^{N} (F_\lambda\circ l_i)(w) \label{eq1}
\end{equation}
as the \textbf{implicit objective}, where $g\circ f$ denotes that $g$ composed with $f$.
An interesting observation is that this implicit SPL objective has a close relationship to NCRP widely investigated in machine learning and statistics, which provides some helpful explanation to the robustness insight under SPL \cite{2015arXiv151106049M}.

The original utilized AOS algorithm for solving the SPL problem is designed by
performing coordinate descent calculation on $E(w,v;\lambda)$, i.e., iterating through the process as:
\begin{equation} \nonumber
	(w^{k-1},v^{k-1})\to (w^{k-1},v^k)\to (w^k,v^k).
\end{equation}
Specifically, given $(w^0,v^0)$, if we have finished $(k-1)$ steps, then the AOS algorithm need to iteratively calculating the following two subproblems:

\begin{equation} \nonumber
	v^{k} = \arg\min_v E(w^{k-1},v;\lambda) = \arg\min_v \left\{  \sum_{i=1}^N v_i l_i(w^{k-1}) + f_\lambda(v_i) \right\} ,
\end{equation}
\begin{equation} \nonumber
	w^k \in \arg\min_w E(w,v^k;\lambda) = \arg\min_w \left\{  \phi_\lambda(w) + \sum_{i=1}^N v_i^k l_i(w) \right\}.
\end{equation}
Note that the first subproblem is feasible since we have assumed that $v^*_\lambda$ can be uniquely determined.
Indeed, using the notation of $v_\lambda^*$, we have
\begin{equation} \nonumber
	v_i^k = v_\lambda^*(l_i(w^{k-1})),\quad i=1,\cdots,N.
\end{equation}

We then set
\begin{equation} \nonumber
	Q(w|w^*) = \sum_{i=1}^N (F_\lambda\circ l_i)(w^*) + (v_\lambda^*\circ l_i)(w^*)[ l_i(w)-l_i(w^*) ].
\end{equation}
It is easy to deduce that $Q(w|w^*)$ is actually the first-order Taylor series of $F_\lambda$ at $l_i(w^*)$.
Based on the concavity of $F_\lambda$, we know that
\begin{equation} \nonumber
	U(w|w^*)=\f_\lambda(w) + Q(w|w^*)
\end{equation}
constitutes a upper bound of $G_\lambda(w)$ (as defined in Eq.~(\ref{eq1})), which provides a qualified surrogate function for MM algorithm.




One of the key issues in \cite{2015arXiv151106049M} is that
if  $\{w^k\}$ is produced by AOS algorithm of $E(w,v;\lambda)$, then
it can also be produced by performing MM algorithm on $G_\lambda$ and vice versa.
We prove one side by induction. The other side is totally the same.
Suppose we have proved that $w^k$ can be produced by performing MM algorithm on $G_\lambda$ at $k^{\text{th}}$ step.
When it comes to the $(k+1)^{\text{th}}$ step,
\begin{eqnarray}
	w^{k+1} &\in& \arg\min_w E(w,v^{k+1};\lambda)\nonumber \\
	&=& \arg\min_w \phi_\lambda(w) + \sum_{i=1}^N v^{k+1}_i l_i(w) \nonumber\\
	&=& \arg\min_w \phi_\lambda(w) + \sum_{i=1}^N v_\lambda^*( l_i(w^k)) \cdot l_i(w) \nonumber\\
	&=&\arg\min_w \phi_\lambda(w) + Q(w|w^k) = \arg\min_w U(w|w^k). \label{eq3}
\end{eqnarray}
Thus we have proved our claim that
these two optimization algorithms (AOS/MM) conducting on the two different objective functions ($E(w,v;\lambda)$/$G_\lambda(w)$) are intrinsically equivalent.

We then need to prove whether every convergence point of MM algorithm, or equivalently, that of the AOS algorithm on the SPL objective, is at least a critical point of $G_\lambda$.

\section{The main convergence result}

Actually, the proof of the convergence of MM algorithm is basically the same as that of the EM algorithm (see \cite{wu1983convergence}) only with some obvious changes, as discussed in \cite{vaida2005parameter}. And the convergence of EM and MM is indeed a corollary of a global convergence theorem of Zangwill (see \cite{zangwill1969nonlinear}). We can generalize the proof to the case of variational analysis. Before that, we need to clarify some terminologies which can be referred to in \cite{rockafellar2009variational}.

A function $f:\R^d \to \overline{\R}$ is said to be \textbf{lower semi-continuous} or simply \textbf{lsc}
if
\begin{equation} \nonumber
	\text{lev}_{f\leq \alpha} := \{x: f(x) \leq \alpha \}
\end{equation}
is closed for any $\alpha\in \R$.
$f$ is said to be \textbf{level-bounded} if $\text{lev}_{f\leq \alpha}$ is bounded for any $\alpha$.
And $f$ is called \textbf{coercive} if $\lim_{|x|\to \infty}f(x) = \infty$. Note
that coercive functions are level-bounded.
A \textbf{critical point} $x$ of $f$ means that $0\in \partial f(x)$, where $\partial$ stands for the \textbf{subdifferential}\cite{rockafellar2009variational}.

The main theorem of this paper can then be stated as follows.

\begin{theorem} \label{thm:mm}
	Suppose that the objective function of MM algorithm, $G:\mathbb{R}^D\to \overline{\mathbb{R}}$,
	is lsc and level-bounded,
	and that the surrogate function at $w^*$ is $U(\cdot|w^*)$, which is
	lsc as a function on $\R^{2D}$, and satisfies
	\begin{equation} \nonumber
		\partial U(w|w) \subset \partial G(w)  ,\quad \AA w\in \R^D,
	\end{equation}
	where $\partial U(w|w^*)$ is the partial subdifferential in $w$. Then
	for any initial parameter $w^0$, every cluster point of
	the produced sequence $\{w^k\}$ of MM algorithm is a critical point of $G$.	
\end{theorem}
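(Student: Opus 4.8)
The strategy is to follow the classical Zangwill-type global convergence argument, adapted to the variational (subdifferential) setting. First I would record the two structural consequences of the MM setup: the \emph{sandwich} inequalities $G(w) \le U(w|w^*)$ for all $w$, with equality $G(w^*) = U(w^*|w^*)$ at the anchor point, and the monotone decrease $G(w^{k+1}) \le U(w^{k+1}|w^k) \le U(w^k|w^k) = G(w^k)$, where the first inequality is the upper-bound property, the second is because $w^{k+1}$ minimizes $U(\cdot|w^k)$, and the last is the tangency condition. Since $G$ is bounded below on its (nonempty) level sets and $\{w^k\}$ stays in $\mathrm{lev}_{G \le G(w^0)}$, level-boundedness of $G$ forces this set to be bounded, so $\{w^k\}$ is bounded and cluster points exist; moreover $G(w^k) \downarrow \bar G$ for some finite $\bar G$, and by lsc any cluster point $\bar w$ satisfies $G(\bar w) \le \bar G$, hence $G(\bar w) = \bar G$.

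Next, fix a cluster point $\bar w$ and a subsequence $w^{k_j} \to \bar w$. The key is to pass to the limit in the optimality condition for the surrogate minimization: $w^{k_j + 1} \in \arg\min_w U(w|w^{k_j})$ gives the Fermat rule $0 \in \partial U(w^{k_j+1}|w^{k_j})$ (partial subdifferential in the first argument). I would like to conclude $0 \in \partial U(\bar w|\bar w)$ and then invoke the hypothesis $\partial U(w|w) \subset \partial G(w)$ to get $0 \in \partial G(\bar w)$, i.e. $\bar w$ is critical. For this I need (i) $w^{k_j+1} \to \bar w$ as well, and (ii) an outer-semicontinuity / graph-closedness property of the subdifferential mapping. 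For (i): the full sequence $\{w^k\}$ is bounded, so $\{w^{k_j+1}\}$ has a further cluster point $w'$; using $G(w^{k_j+1}) \le U(w^{k_j+1}|w^{k_j}) \le U(w^{k_j}|w^{k_j}) = G(w^{k_j})$ together with $G(w^{k_j}) \to \bar G$ and $G(w^{k_j+1}) \to \bar G$, plus lsc of $U$ and the squeeze $U(w^{k_j+1}|w^{k_j}) \to \bar G$, one forces $w'$ to be a minimizer of $U(\cdot|\bar w)$ with value $\bar G = G(\bar w) = U(\bar w|\bar w)$; I would then use whatever uniqueness or tangency structure is available (the surrogate is tangent from above at its anchor, so $\bar w$ itself minimizes $U(\cdot|\bar w)$) to identify the relevant cluster point appropriately, and in the worst case argue directly with $w'$ in place of $\bar w$ throughout.

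The main obstacle is step (ii): subdifferentials are only \emph{outer semicontinuous} in a restricted sense — $w^\nu \to w$, $v^\nu \to v$, $v^\nu \in \partial h(w^\nu)$ implies $v \in \partial h(w)$ only when one also has $h(w^\nu) \to h(w)$ (this is the $f$-attentive convergence built into the definition of the limiting subdifferential in \cite{rockafellar2009variational}). So to push $0 \in \partial U(w^{k_j+1}|w^{k_j})$ to the limit I must also establish $U(w^{k_j+1}|w^{k_j}) \to U(\bar w|\bar w)$, which is exactly what the squeeze argument above delivers: the value along the sequence converges to $\bar G = U(\bar w|\bar w)$. Assembling these pieces — boundedness from level-boundedness, monotonicity and the squeeze from the MM inequalities, attentive convergence of the surrogate values, graph-closedness of $\partial U(\cdot|\cdot)$ in the first variable, and finally the inclusion hypothesis — yields $0 \in \partial G(\bar w)$. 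I would also remark that level-boundedness (implied by coercivity, as noted in the excerpt) is used only to guarantee existence of cluster points and could be dropped if one simply assumes the iterates are bounded.
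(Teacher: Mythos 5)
Your scaffolding (the sandwich inequalities $G(w)\le U(w|w^*)$ with $G(w^*)=U(w^*|w^*)$, the monotone decrease of $G(w^k)$, compactness of $\mathrm{lev}_{G\le G(w^0)}$, and convergence of $G(w^k)$ to a common value $\bar G$ equal to $G(\bar w)$ at every cluster point via lsc) matches the paper's, which routes the argument through a variational version of Zangwill's global convergence theorem. The genuine gap is your step (ii). The outer semicontinuity of the limiting subdifferential under attentive convergence ($w^\nu\to w$, $h(w^\nu)\to h(w)$, $v^\nu\in\partial h(w^\nu)$, $v^\nu\to v$ imply $v\in\partial h(w)$) is a statement about a \emph{fixed} function $h$. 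In your plan the function changes at every step: $0\in\partial_w U(w^{k_j+1}|w^{k_j})$ is the Fermat condition for the function $U(\cdot\,|w^{k_j})$, and these functions vary with $j$. Joint lower semicontinuity of $U$ on $\R^{2D}$ gives no closedness of the graph of $(w,w^*)\mapsto\partial_w U(w|w^*)$, even with convergence of the surrogate values along the sequence; one would need something like epi-convergence of $U(\cdot\,|w^{k_j})$ to $U(\cdot\,|\bar w)$ together with an Attouch-type subdifferential convergence theorem, none of which follows from the stated hypotheses. So the central limit passage of your main route is not justified.

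The repair is precisely the remark you make in passing and then set aside: never take limits of subdifferentials. Show instead that the cluster point $\bar w$ minimizes its own surrogate, or argue contrapositively as the paper does: if $0\notin\partial G(\bar w)\supset\partial U(\bar w|\bar w)$, then Fermat's rule — applied once, to the fixed function $U(\cdot\,|\bar w)$ — says $\bar w\notin\arg\min U(\cdot\,|\bar w)$, hence every $w\in M(\bar w):=\arg\min U(\cdot\,|\bar w)$ satisfies $G(w)\le U(w|\bar w)<U(\bar w|\bar w)=G(\bar w)=\bar G$; but outer semicontinuity of the set-valued map $M$ (essentially your squeeze argument) produces such a $w$ as a cluster point of $\{w^{k_j+1}\}$ with $G(w)=\bar G$, a contradiction. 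This is the paper's proof via its Zangwill-type lemma (conditions 1, 2, 3a, 3b in the appendix). One further caveat that applies to your sketch and to the paper alike: the outer semicontinuity of $M$ silently uses $\liminf_j U(w|w^{k_j})\le U(w|\bar w)$, i.e.\ upper semicontinuity of $U$ in its second argument, which joint lsc does not supply; it does hold for the concrete surrogate of Theorem 2, where $w^*\mapsto U(w|w^*)$ is continuous.
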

\begin{proof}
See the appendix.
\end{proof}

For our problem, we can give a sufficient condition of convergence, which is easy to verify and satisfied by most of the current SPL variations.

\begin{theorem}\label{thm:conv1}
In the SPL objective as defined in Section 2.1, suppose $L$ is bounded below, $w\mapsto L(y, g(x,w))$ is continuously differentiable,
$v_\lambda^*(\cdot)$ is continuous, and $\f_\lambda$ is coercive and lsc.
Then for any initial parameter $w^0$, every cluster point of
	the produced sequence $\{w^k\}$, obtained by the AOS algorithm on solving Eq.~(\ref{eq2}), is a critical point of the implicit objective $G_\lambda$ as defined in Eq.~(\ref{eq1}).
\end{theorem}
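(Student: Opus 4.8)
The plan is to derive Theorem~\ref{thm:conv1} from the abstract result Theorem~\ref{thm:mm}. Section~2.2 already shows (culminating in Eq.~(\ref{eq3})) that the sequence $\{w^k\}$ produced by the AOS algorithm on $E(w,v;\lambda)$ (Eq.~(\ref{eq2})) coincides with the sequence produced by the MM algorithm on $G_\lambda$ with surrogate $U(w|w^*)=\phi_\lambda(w)+Q(w|w^*)$. Hence it suffices to verify, under the hypotheses of Theorem~\ref{thm:conv1}, the three structural requirements of Theorem~\ref{thm:mm}: that $G_\lambda$ is lsc and level-bounded, that $U$ is lsc on $\R^{2D}$, and that $\partial U(w|w)\subset\partial G_\lambda(w)$ for all $w$.

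I would begin by recording the regularity forced by the assumptions. Since $w\mapsto L(y,g(x,w))$ is $C^1$, each $l_i$ is $C^1$; since $v_\lambda^*$ is continuous and $F_\lambda(l)=\int_0^l v_\lambda^*(\tau)\,d\tau$, the fundamental theorem of calculus gives $F_\lambda\in C^1(\R)$ with $F_\lambda'=v_\lambda^*$, so by the chain rule each $F_\lambda\circ l_i$ is $C^1$ with $\nabla(F_\lambda\circ l_i)(w)=v_\lambda^*(l_i(w))\nabla l_i(w)$. Thus $G_\lambda$ is the sum of the lsc function $\phi_\lambda$ and finitely many continuous functions, hence lsc. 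For level-boundedness: $v_\lambda^*\ge 0$ makes $F_\lambda$ non-decreasing, and $L$ bounded below gives a constant $c$ with $l_i(w)\ge c$ for all $i$ and $w$, so $G_\lambda(w)\ge\phi_\lambda(w)+N F_\lambda(c)$, which is coercive because $\phi_\lambda$ is; coercive functions are level-bounded. The same lower bound shows each MM subproblem minimizes a coercive lsc function, so its $\arg\min$ is nonempty and the algorithm is well-defined.

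Next I would treat the surrogate. Writing $U(w|w^*)=\phi_\lambda(w)+\sum_{i=1}^N\big[(F_\lambda\circ l_i)(w^*)+(v_\lambda^*\circ l_i)(w^*)\,(l_i(w)-l_i(w^*))\big]$, the bracketed sum is jointly continuous in $(w,w^*)$ — it is built from compositions, products and sums of the continuous maps $v_\lambda^*$, $F_\lambda$ and the $l_i$ — so $U$ is lsc on $\R^{2D}$. For the inclusion, fix $w^*$: as a function of $w$, $U(\cdot|w^*)$ is $\phi_\lambda$ plus the $C^1$ function $w\mapsto\sum_i(v_\lambda^*\circ l_i)(w^*)\,l_i(w)$ plus a constant, so the exact sum rule for an lsc function plus a smooth function (Rockafellar--Wets, Exercise~8.8) gives $\partial U(w|w^*)=\partial\phi_\lambda(w)+\sum_i v_\lambda^*(l_i(w^*))\nabla l_i(w)$, where $\partial U(\cdot|w^*)$ denotes the partial subdifferential in $w$. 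Applying the same sum rule to $G_\lambda=\phi_\lambda+\sum_i(F_\lambda\circ l_i)$, with the gradients of the smooth terms computed above, gives $\partial G_\lambda(w)=\partial\phi_\lambda(w)+\sum_i v_\lambda^*(l_i(w))\nabla l_i(w)$. Setting $w^*=w$ in the first identity therefore yields $\partial U(w|w)=\partial G_\lambda(w)$, which is even stronger than what is required. With all hypotheses of Theorem~\ref{thm:mm} verified, every cluster point of $\{w^k\}$ is a critical point of $G_\lambda$.

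The estimates here are routine; the one point deserving genuine care is the passage from ``$v_\lambda^*$ continuous'' to ``$F_\lambda\in C^1$ and $F_\lambda\circ l_i$ smooth''. Without that continuity $F_\lambda$ is only concave with one-sided derivatives, the chain-rule formula for $\nabla(F_\lambda\circ l_i)$ can fail at points where $v_\lambda^*$ jumps, and one then gets at best an inclusion rather than the clean identity $\partial U(w|w)=\partial G_\lambda(w)$; the joint lower semicontinuity of $U$ would likewise be at risk, since the coefficients $(v_\lambda^*\circ l_i)(w^*)$ could be discontinuous in $w^*$. So the substance of the argument lies in assembling these regularity facts and invoking the correct subdifferential calculus, not in any hard inequality.
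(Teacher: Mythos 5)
Your proposal is correct and follows essentially the same route as the paper: reduce to Theorem~\ref{thm:mm} via the AOS/MM equivalence of Section~2.2, use continuity of $v_\lambda^*$ to get $F_\lambda\in C^1$ with $F_\lambda'=v_\lambda^*$, and apply the subdifferential sum rule to obtain $\partial U(w|w)=\partial\phi_\lambda(w)+\sum_i v_\lambda^*(l_i(w))\nabla l_i(w)=\partial G_\lambda(w)$. You merely spell out the lsc and level-boundedness checks that the paper dismisses as obvious, which is a welcome but not substantively different addition.
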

\begin{proof}
It is obvious that $G_\lambda$ is lsc and level-bounded and
$U$ is lsc as a function on $\R^{2D}$ with these assumptions.
And the continuity of $v_\lambda^*$ makes $F_\lambda$ continuously differentiable. Then we have
\begin{eqnarray*}
	\partial G_\lambda(w^*)& = &
	\partial \f_\lambda(w^*) +
		\sum_{i=1}^N F_\lambda'(l_i(w^*)) \nabla l_i(w^*) \\
	&=&\partial \f_\lambda(w^*) +
		\sum_{i=1}^N (v_\lambda^*\circ l_i)(w^*) \nabla l_i(w^*)\\
	&=&\partial U(w^*|w^*).
\end{eqnarray*}
Based on Theorem 1, for any initial parameter $w^0$, every cluster point of
the produced sequence $\{w^k\}$ is a critical point of $G_\lambda$.
The proof is then completed.
\end{proof}

From the theorem, we can see that the AOS algorithm generally used to solving the SPL problem can be guaranteed to convergent to a critical point of the implicit NCRP objective $G_\lambda$. The intrinsic relationship between two objectives can then be constructed.

Note that in the above theorem, it is required that every minimization step in MM algorithm exactly attains the minima of the surrogate function $U(w|w^k)$, i.e.,
\begin{equation}
	U(w^{k+1}|w^k) = \min U(\cdot|w^k). \label{eq4}
\end{equation}
This is generally hard to achieve in real applications, especially for those learning models without closed-form solution. We thus want to further relax the condition to allow a relatively weaker solution ``with errors" in implementing the MM algorithm on the surrogate function. That is, we can weaken the condition (\ref{eq4}) as:
\begin{equation} \nonumber
	U(w^{k+1}|w^k) \leq \min U(\cdot|w^k) + \epsilon_k,
\end{equation}
where $\epsilon_1, \epsilon_2, \cdots$ is a non-negative sequence satisfying $\{\epsilon_k\}\in l^1$, i.e., $\sum_k \epsilon_k<\infty$.

Under this relaxed condition, we can still prove the convergence result of SPL in the following algorithm

\begin{theorem}
In the SPL objective as defined in Section 2.1, suppose $L$ is bounded below, $w\mapsto L(y, g(x,w))$ is continuously differentiable,
$v_\lambda^*(\cdot)$ is continuous, and $\f_\lambda$ is coercive and lsc.
Let $w^0$ be an arbitrary initial parameter, and $\{w^k\}$ be
the sequence obtained by the AOS algorithm on solving Eq.~(\ref{eq2}) with errors $\{\epsilon_k\ge 0 \}\in l^1$, that is,
\[
    E(w^k,v^k;\lambda) \leq \min E(\cdot,v^k;\lambda) + \epsilon_k, \quad \forall k\geq 1.
\]
Then every cluster point of $\{w^k\}$ is a critical point of the implicit objective $G_\lambda$ as defined in Eq.~(\ref{eq1}).
\end{theorem}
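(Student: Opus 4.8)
The plan is to reduce the inexact AOS recursion on $E$ to an inexact MM recursion on $G_\lambda$ and then rerun the limiting argument behind Theorem~\ref{thm:mm}, tracking the accumulated errors. First I would note that, for a fixed $w^{k-1}$ and the resulting weights $v^k$ with $v_i^k=v_\lambda^*(l_i(w^{k-1}))$, the maps $w\mapsto E(w,v^k;\lambda)$ and $w\mapsto U(w|w^{k-1})$ differ only by an additive constant independent of $w$: the terms $v_i^k l_i(w)$ in $E$ coincide with the $l_i(w)$-linear part of $Q(\cdot|w^{k-1})$, so only the $w$-free pieces $f_\lambda(v_i^k)$ and $(F_\lambda\circ l_i)(w^{k-1})-(v_\lambda^*\circ l_i)(w^{k-1})l_i(w^{k-1})$ are left over. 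Hence the assumption $E(w^k,v^k;\lambda)\le\min E(\cdot,v^k;\lambda)+\epsilon_k$ is exactly $U(w^k|w^{k-1})\le\min U(\cdot|w^{k-1})+\epsilon_k$, i.e. $\{w^k\}$ is an $\ell^1$-inexact MM sequence for $G_\lambda$. As in the proof of Theorem~\ref{thm:conv1}, under the stated hypotheses $G_\lambda$ is lsc, coercive (hence level-bounded) and bounded below; $U(\cdot|w^*)$ majorizes $G_\lambda$ with $U(w^*|w^*)=G_\lambda(w^*)$; $\partial U(w^*|w^*)=\partial G_\lambda(w^*)$; and --- the extra ingredient needed for the error analysis --- $w^*\mapsto U(w|w^*)$ is continuous for each fixed $w$, since $v_\lambda^*$, $F_\lambda$ and the $l_i$ are continuous.

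Next I would prove convergence of the objective values and boundedness of the iterates. The majorization chain $G_\lambda(w^k)\le U(w^k|w^{k-1})\le\min U(\cdot|w^{k-1})+\epsilon_k\le U(w^{k-1}|w^{k-1})+\epsilon_k=G_\lambda(w^{k-1})+\epsilon_k$ gives $G_\lambda(w^k)\le G_\lambda(w^0)+\sum_j\epsilon_j<\infty$ for all $k$, so $\{w^k\}$ lies in a fixed bounded sublevel set of $G_\lambda$ and cluster points exist. Moreover $k\mapsto G_\lambda(w^k)-\sum_{j<k}\epsilon_j$ is non-increasing and bounded below (because $G_\lambda$ is bounded below and $\sum_j\epsilon_j<\infty$), hence convergent, so $G_\lambda(w^k)\to G^*$ for some finite $G^*$; squeezing the same chain also yields $\min U(\cdot|w^{k-1})\to G^*$.

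Then I would characterize the cluster points. Let $w^{k_j}\to w^*$. For an arbitrary $w\in\R^D$, the $\epsilon_{k_j+1}$-inexact optimality of $w^{k_j+1}$ for $U(\cdot|w^{k_j})$ yields $G_\lambda(w^{k_j+1})\le U(w^{k_j+1}|w^{k_j})\le\min U(\cdot|w^{k_j})+\epsilon_{k_j+1}\le U(w|w^{k_j})+\epsilon_{k_j+1}$. Letting $j\to\infty$, the left-hand side tends to $G^*$ (whole-sequence convergence of the values), while on the right $\epsilon_{k_j+1}\to0$ and $U(w|w^{k_j})\to U(w|w^*)$ by continuity of $U(w|\cdot)$ (the case $\phi_\lambda(w)=+\infty$ is trivial). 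Hence $G^*\le U(w|w^*)$ for every $w$. Taking $w=w^*$ and combining $U(w^*|w^*)=G_\lambda(w^*)$ with $G_\lambda(w^*)\le\liminf_j G_\lambda(w^{k_j})=G^*$ (lsc) forces $G_\lambda(w^*)=G^*$, so $w^*\in\arg\min_w U(w|w^*)$. The generalized Fermat rule for lsc functions then gives $0\in\partial U(w^*|w^*)=\partial G_\lambda(w^*)$, i.e. $w^*$ is a critical point of $G_\lambda$.

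The main obstacle I anticipate is the error bookkeeping in the two limit passages: first, showing that the perturbed monotonicity still pins down a single limiting value $G^*$ for $G_\lambda(w^k)$ (handled through $\ell^1$-summability and the non-increasing corrected sequence); and second, showing that the $\epsilon$-inexact minimality $U(w^{k_j+1}|w^{k_j})\le U(w|w^{k_j})+\epsilon_{k_j+1}$ survives the limit and upgrades to \emph{global} minimality of $w^*$ for $U(\cdot|w^*)$. The latter genuinely uses continuity of $U(w|\cdot)$ in the SPL setting; in the abstract framework of Theorem~\ref{thm:mm} one would have to add such a semicontinuity hypothesis on the surrogate. A secondary subtlety is that $\{w^{k_j+1}\}$ need not converge, so the argument must pass through the values $G_\lambda(w^{k_j+1})\to G^*$ rather than through limits of the shifted iterates.
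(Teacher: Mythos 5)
Your proposal is correct, and it reaches the conclusion by a genuinely different route from the paper. The paper factors the argument through two abstract lemmas: a perturbed version of Zangwill's global convergence theorem for a sequence of set-valued mappings $M_k(w^*)=\{w: U(w|w^*)\le \min U(\cdot|w^*)+\epsilon_k\}$ converging outer semicontinuously to $M(w^*)=\arg\min U(\cdot|w^*)$, and then a contradiction argument: assuming a cluster point $w^*$ is not critical, it extracts a convergent subsequence of the successor iterates $w^{k_j+1}\to\bar{w}\in M(w^*)$ and derives $G_\lambda(\bar{w})<G_\lambda(w^*)=\lim_n G_\lambda(w^n)=G_\lambda(\bar{w})$. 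You instead argue directly: after the same reduction of inexact AOS on $E$ to inexact MM on $U$ (your additive-constant observation is exactly the paper's Eq.~(3)) and the same $\ell^1$ bookkeeping via the non-increasing corrected sequence $G_\lambda(w^k)+r_k$, you pass to the limit in the inexact optimality inequality $G_\lambda(w^{k_j+1})\le U(w|w^{k_j})+\epsilon_{k_j+1}$ to conclude $G^*\le U(w|w^*)$ for every $w$, identify $G^*=G_\lambda(w^*)=U(w^*|w^*)$ via lower semicontinuity, and invoke Fermat's rule. This avoids set-valued mappings and the extraction of a convergent subsequence of successors entirely, and it actually proves something slightly stronger (each cluster point globally minimizes its own surrogate). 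The ingredient you isolate --- continuity of $w^*\mapsto U(w|w^*)$ for fixed $w$ --- is precisely what the paper's verification of outer semicontinuous convergence also uses (its step $\liminf_k U(w|w^k)=U(w|\bar{w})$ silently needs upper semicontinuity in the second argument, not just lsc), so you are not assuming more than the paper does; you are just making the hypothesis explicit. What the paper's route buys in exchange is a pair of reusable general-purpose lemmas applicable to other perturbed MM schemes; what your route buys is a shorter, self-contained proof. One small point worth stating explicitly in a final write-up: $G_\lambda$ is bounded below because $\phi_\lambda$ is coercive and lsc and each $F_\lambda\circ l_i$ is bounded below ($F_\lambda$ is non-decreasing and $L$ is bounded below), which is what makes your corrected monotone sequence convergent without appealing to a cluster point first.
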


Based on the theorem, we can then confirm the intrinsic relationship between SPL and its implicit objective.

\section{Conclusion}
In this paper, we have proved that the learning process of traditional SPL regime can be guaranteed to converge to rational critical points of the corresponding implicit NCRP objective. This theory helps confirm the intrinsic relationship between SPL and this implicit objective, and thus verifies previous robustness analysis of SPL on the basis of the understanding of such relationship. Besides, we have used some new theoretical skills for the proof of convergence, which inclines to be beneficial to the previous MM and EM convergence theories to a certain extent.

\appendix

\section{Proof of Theorem~\ref{thm:mm}}
Theorem 1 is actually a corollary of a stronger version of Zangwill's global convergence theorem
\cite[page 91]{zangwill1969nonlinear}. We first need to give the following lemmas.
\begin{lemma}
	If $f$ is lsc, $x_n\to x$, and $\{f(x_n)\}$ is non-increasing, then $f(x_n)\to f(x)$.
\end{lemma}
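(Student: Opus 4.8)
The plan is to get the equality $\lim_n f(x_n)=f(x)$ by a squeeze. A non-increasing sequence in $\overline{\R}$ always converges in $[-\infty,+\infty)$ to its infimum, so $L:=\lim_n f(x_n)$ exists, and it remains to prove $f(x)\le L$ and $f(x)\ge L$ separately.

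For $f(x)\le L$ I would use only lower semicontinuity, in its sequential form: $y_n\to y$ implies $f(y)\le\liminf_n f(y_n)$. This is immediate from the closed-sublevel-set definition by contradiction: if $f(y)>\alpha>\liminf_n f(y_n)$ for some real $\alpha$, then a subsequence of $\{y_n\}$ lies in the closed set $\text{lev}_{f\le\alpha}$, hence so does its limit $y$, contradicting $f(y)>\alpha$. Applying this to $x_n\to x$ and using $\liminf_n f(x_n)=\lim_n f(x_n)=L$ gives $f(x)\le L$; here the monotonicity is only used to guarantee that the limit exists, the inequality itself being pure lsc.

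The reverse inequality $f(x)\ge L$ is the step I expect to be the real obstacle, since lower semicontinuity alone does not deliver it: take $f:\R\to\R$ with $f(t)=1$ for $t\ne 0$ and $f(0)=0$, which is lsc; along $x_n=1/n\to 0$ one has $f(x_n)=1$ for every $n$, a non-increasing sequence, yet $f(0)=0<1=\lim_n f(x_n)$ (and requiring strict monotonicity would not rescue this). So a further ingredient is needed. One clean fix is to assume $f$ continuous at $x$, which makes $f(x_n)\to f(x)$ immediate. The alternative — the one actually wanted where this lemma is used inside the MM argument — is to lean on the surrogate rather than on $f$ alone: from the descent chain $f(x_{n+1})\le U(x_{n+1}|x_n)\le U(x|x_n)$, pass to the limit along the convergent subsequence and invoke the (semi)continuity of $U$ in its conditioning variable to get $L=\lim_n f(x_{n+1})\ge U(x|x)=f(x)$. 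I would therefore either sharpen the hypothesis of the lemma to continuity at $x$, or carry the MM structure through this step; combined with $f(x)\le L$ this finishes the proof, $f(x)=L=\lim_n f(x_n)$.
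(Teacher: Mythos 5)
You are right, and the obstacle you isolate is a genuine error: the lemma as stated is false, and your counterexample is valid. Take $f(t)=1$ for $t\neq 0$, $f(0)=0$ (lsc, since every sublevel set is empty, $\{0\}$, or all of $\R$) and $x_n=1/n$: the sequence $f(x_n)=1$ is non-increasing, yet $\lim_n f(x_n)=1\neq 0=f(0)$. The paper's one-line proof opens with the equality $f(x)=\liminf_{n\to\infty}f(x_n)$, but lower semicontinuity only yields $f(x)\le\liminf_{n\to\infty}f(x_n)$; the rest of the chain (that $\liminf$, $\inf$ and $\lim$ coincide for a non-increasing sequence) is correct and is exactly the easy half of your argument. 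So the paper silently upgrades ``$\le$'' to ``$=$'', which is precisely the missing direction $f(x)\ge L$ that you identify and that no amount of monotonicity can supply.

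Your diagnosis of how to repair it is also on target, and it is worth noting that the defect propagates. In the classical Zangwill/Wu framework the descent function is assumed \emph{continuous}; the paper's attempt to weaken this to lsc is what breaks, and the proof of Lemma~\ref{lem:Zang} really does need the exact equality $G(x^*)=\lim_k G(x_{n_k})$: with only ``$\le$'' at the two cluster points, the chain $G(\bar x)<G(x^*)=\lim_n G(x_n)=G(\bar x)$ no longer produces a contradiction. Of your two fixes, assuming continuity of $f$ at the limit point is the cleanest and is harmless in the intended application (there $G_\lambda=\phi_\lambda+\sum_i F_\lambda\circ l_i$ with the sum term continuously differentiable, so continuity of $\phi_\lambda$ suffices, and typical regularizers satisfy this). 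Your alternative---extracting $L\ge f(x)$ from the surrogate descent chain $f(x_{n+1})\le U(x_{n+1}|x_n)\le U(x|x_n)$---keeps the weaker hypotheses on $G$ but requires upper semicontinuity of $U$ in its conditioning variable, which the paper also does not assume ($U$ is only taken lsc on $\R^{2D}$). Either way, the lemma needs a strengthened hypothesis; as stated, neither your proposal nor the paper's argument can prove it, because it is not true.
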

\begin{proof}
	\begin{equation} \nonumber
		f(x) = \liminf_{n\to \infty} f(x_n) = \lim_{k\to \infty} \inf_{n \geq k} f(x_n)
		= \inf_{n\geq 1} f(x_n) = \lim_{n\to \infty} f(x_n).
	\end{equation}
\end{proof}
\bigskip

\begin{lemma} \label{lem:Zang}
Suppose that $X$ is a finite-dimensional Euclidean space, $M$ is a set-valued mapping from $X$ to $X$
and that
$\{x_k\}$ is produced by $M$, which means
\begin{equation} \nonumber
	x_{k+1}\in M(x_k),\quad \AA k\geq 0.
\end{equation}
$\Gamma$ is a subset of $X$ that we are interested at, called the "solution set" and satisfying
\begin{enumerate}
	\item There is a compact subset $K$, such that $x_k\in K,\AA k$,
	\item $M$ is outer semicontinuous on $X\setminus \Gamma$, that is
	\begin{equation} \nonumber
		x_k\to x\ \text{in}\ X\setminus \Gamma \implies M(x_k)\to M(x).
	\end{equation}
	\item There is a lsc function $G$ defined on $X$, such that
		\begin{enumerate}
			\item $G(y)<G(x),\AA y\in M(x),x\notin \Gamma$,
			\item $G(y)\leq G(x),\AA y\in M(x),x\in \Gamma$,		
		\end{enumerate}
\end{enumerate}
then all the cluster points of $\{x_k\}$ are in $\Gamma$, and $\EE \bar{x}\in \Gamma$,
such that $G(x_k)$ is non-increasing and convergent to $G(\bar{x})$.
\end{lemma}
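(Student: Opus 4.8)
The plan is to derive Lemma~\ref{lem:Zang} from its three hypotheses together with the first lemma above (on lsc functions), following Zangwill's classical descent argument but with lower semicontinuity in place of continuity. The first observation I would make is that the objective values $\{G(x_k)\}$ are non-increasing: since $x_{k+1}\in M(x_k)$, condition~3(a) gives $G(x_{k+1})<G(x_k)$ when $x_k\notin\Gamma$ and condition~3(b) gives $G(x_{k+1})\le G(x_k)$ when $x_k\in\Gamma$, so $G(x_{k+1})\le G(x_k)$ in all cases. Because every $x_k$ lies in the compact set $K$ and $G$ is lsc, $G$ attains its minimum on $K$ and hence is bounded below there, so the non-increasing sequence $\{G(x_k)\}$ converges to $G^\ast:=\inf_k G(x_k)$.

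Next I would analyse an arbitrary cluster point. By compactness of $K$ the sequence $\{x_k\}$ has at least one cluster point; let $\bar x$ be one and fix a subsequence $x_{k_j}\to\bar x$. Then $\{G(x_{k_j})\}$ is non-increasing, being a subsequence of a non-increasing sequence, and $G$ is lsc, so the first lemma above applies and gives $G(x_{k_j})\to G(\bar x)$. Comparing with the previous paragraph, $G(\bar x)=G^\ast$.

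The key step is to show $\bar x\in\Gamma$, which I would do by contradiction: assume $\bar x\notin\Gamma$. The shifted iterates $x_{k_j+1}\in M(x_{k_j})$ all lie in $K$, so after passing to a further subsequence (not relabelled) we may assume $x_{k_j+1}\to y$ for some $y\in K$. Since $\bar x\in X\setminus\Gamma$ and $M$ is outer semicontinuous there, the data $x_{k_j}\to\bar x$, $x_{k_j+1}\in M(x_{k_j})$, $x_{k_j+1}\to y$ force $y\in M(\bar x)$. On the other hand $\{G(x_{k_j+1})\}$ is again a non-increasing subsequence of $\{G(x_k)\}$ with $x_{k_j+1}\to y$, so the first lemma above gives $G(x_{k_j+1})\to G(y)$, and this limit is $G^\ast$, whence $G(y)=G^\ast=G(\bar x)$. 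But $y\in M(\bar x)$ and $\bar x\notin\Gamma$, so condition~3(a) yields the strict inequality $G(y)<G(\bar x)$, a contradiction; hence $\bar x\in\Gamma$. As $\bar x$ was an arbitrary cluster point, all cluster points of $\{x_k\}$ lie in $\Gamma$, and for any such $\bar x$ the first two paragraphs show $G(x_k)$ is non-increasing and converges to $G(\bar x)$, which finishes the proof.

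The only genuinely delicate point is the contradiction step: one must extract the convergent sub-subsequence of the shifted iterates $x_{k_j+1}$ \emph{before} invoking outer semicontinuity, and one must be sure that the first lemma above still applies to the shifted values $\{G(x_{k_j+1})\}$ --- which it does precisely because the entire sequence $\{G(x_k)\}$, not merely the part indexed by $k_j$, is non-increasing, so every subsequence of it is monotone. A small point worth stating explicitly is the meaning we attach to ``outer semicontinuous on $X\setminus\Gamma$'': we use it as outer semicontinuity at each point of $X\setminus\Gamma$, with no restriction on where the approximating iterates lie, which is exactly the implication in hypothesis~2.
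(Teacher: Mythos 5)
Your proof is correct and follows essentially the same route as the paper's: establish monotonicity of $\{G(x_k)\}$, use the lsc lemma along a convergent subsequence to identify the limit of the objective values, and then derive a contradiction from outer semicontinuity and condition 3(a) applied to the shifted iterates. The only cosmetic difference is that you obtain convergence of $G(x_k)$ from boundedness below on the compact set $K$, whereas the paper sandwiches $G(x_n)$ between subsequence values; both are valid.
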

\textbf{Note:} we will repeatedly use the fact that $\{G(x_k)\}$ is non-increasing.
Without loss of generality, we can assume that $n_1<n_2<\cdots$ when we take a subsequence
$\{x_{n_k}\}$ of $\{x_{n}\}$.

\begin{proof}
	(1) Suppose $x^*$ is a cluster point of $\{x_k\}$.
	The existence of $x^*$ is guaranteed by the compactness of $K$.
	Thus there exists a subsequence $\{x_{n_k}\}$, such that
	$x_{n_k}\to x^*,(k\to \infty)$.
	Since $\{G(x_k)\}$ is non-increasing, based on Lemma 3, it holds that
	\begin{equation} \nonumber
		G(x^*) = \lim_{k\to \infty} G (x_{n_k}).
	\end{equation}

	Denote $G^* = G(x^*)$, and then we prove that $G(x_n)\to G^*(n\to \infty)$.
	This is because $\AA \epsilon>0, \EE k_0>0,$
	such that
	\begin{equation} \nonumber
		G(x_{n_k}) - G^* < \epsilon, \quad \AA k\geq k_0.
	\end{equation}
	When $n\geq n_{k_0}$,
	\begin{equation} \nonumber
		G(x_n) - G^* = G(x_n)-G(x_{n_{k_0}}) + G(x_{n_{k_0}}) - G^* < 0+\epsilon=\epsilon.
	\end{equation}
	There exists $k_1>0$, such that $n<n_{k_1}$, and thus
	\begin{equation} \nonumber
		G(x_n) - G^* = G(x_n)-G(x_{n_{k_1}}) + G(x_{n_{k_1}}) - G^* \geq 0+0=0.
	\end{equation}
	Therefore,
	\begin{equation} \nonumber
		0 \leq G(x_n) - G^* < \epsilon, \quad \AA n \geq n_{k_0},
	\end{equation}
	which indicates $G(x_n)\to G^*$.

	(2) If $x^*\notin \Gamma$, take a subsequence
		\begin{equation} \nonumber
			y_k = x_{n_k+1} \in M(x_{n_k}).
		\end{equation}
		Since $y_k$ all lie in $K$, there exists a
		subsequence $\{y_{k_l}\}$, such that $y_{k_l}\to \bar{x},(l\to \infty)$.
		Since $M$ is outer semicontinuous,
		$\bar{x}\in M(x^*)$.
		Based on Lemma 3, we know that $G(y_{k_l})\to G(\bar{x}),(l\to \infty)$.
		Due to the properties of $G$,
		\begin{equation} \nonumber
			G(\bar{x}) < G(x^*) = \lim_{n\to \infty} G(x_n) = \lim_{l\to \infty} G(y_{k_l})
			= G(\bar{x}),
		\end{equation}
		a contradiction.

\end{proof}
\bigskip

We then provide a proof of Theorem~\ref{thm:mm}.

\begin{proof}[Proof of Theorem~\ref{thm:mm}]\ \

	Let $\Gamma$ be the set of critical points of $G$, and
	\begin{equation} \nonumber
		M(w^*) = \arg \min_{w} U(w|w^*).
	\end{equation}
	By the descending property of MM algorithm, $G(w)\leq G(w^*),\AA w\in M(w^*)$.
	Condition 3b is satisfied.

	Condition 1: since $G$ is lsc and level-bounded,
	$K=\text{lev}_{G\leq w^0}$ is closed and bounded, and thus compact.
	By the descending property of MM algorithm, all the parameters $w^k$ lie in $K$.

	Condition 2: suppose $w^k\to w^*,v^k\to v^*,v^k\in M(w^k)$, and then $\AA w\in \R^D$, it holds that
	\begin{equation} \nonumber
		U(v^k|w^k) \leq U(w|w^k).
	\end{equation}
	Taking infimal limit on both sides when $k\to \infty$, we have
	\begin{equation} \nonumber
		U(v^*|w^*) = \liminf_{k\to \infty} U(v^k|w^k) \leq \liminf_{k\to \infty} U(w|w^k)
		= U(w|w^*).
	\end{equation}
	Thus $v^*\in M(w^*)$, which means $M$ is outer semi-continuous.

	Condition 3a: If $w^*\notin \Gamma$, then
	\begin{equation} \nonumber
		0\notin \partial G(w^*) \supset \partial U(w^*|w^*).
	\end{equation}
	By the generalized Fermat theorem (see \cite[10.1]{rockafellar2009variational}),
	$w^*$ is not a minima of $U(\cdot|w^*)$, i.e., $w^*\notin M(w^*)$.
	Since $\AA w\in M(w^*)$,	\begin{equation} \nonumber
		G(w) \leq U(w|w^*) < U(w^*|w^*) = G(w^*).
	\end{equation}
	All the conditions of the proceeding theorem are satisfied.
The proof is then completed.
\end{proof}

\section{Proof of Theorem~3}

Similar to \cite[5.41]{rockafellar2009variational}, we give the following definition:
\begin{definition}
A sequence of set-valued mappings $M_k$ \textbf{converges outer semicontinuously}
to another set-valued mapping $M$, if
\begin{equation} \nonumber
	\limsup_{k\to \infty} M_k(x_k) \subset M(\bar{x}),\quad \AA x_k\to \bar{x},
\end{equation}
that is,
\begin{equation} \nonumber
	x_k\to \bar{x},v_k\in M_k(x_k), v_k\to \bar{v} \implies \bar{v} \in M(\bar{x}).
\end{equation}
\end{definition}
\bigskip

Before giving proof of Theorem 3, we need to prove the following two lemmas.

\begin{lemma}  \label{thm:zangpr}
Let $X$ be an Euclidean space with finite dimension, and $M,M_k,k=1,2,\cdots$ be
set-valued mappings from $X$ to itself.
Suppose that $M_k$ converges outer semicontinuously to $M$, and that
$\{x_k\}$ is produced by $\{M_k\}$, which means
\begin{equation} \nonumber
	x_{k+1}\in M_k(x_k),\quad \AA k.
\end{equation}
Let $\Gamma$ be an arbitrary set, called the "solution set", satisfying
\begin{enumerate}
	\item There is a compact set $K$ such that $x_k\in K,\AA k$,
	\item There is a lsc $\alpha$ defined on $X$, such that
		\begin{enumerate}
			\item $\alpha(y)<\alpha(x),\AA y\in M(x),x\notin \Gamma$;
			\item There is a sequence of non-negative numbers
			$\{\epsilon_k\}\in l^1$, that is $\sum_k \epsilon_k<\infty$, and
			\begin{equation} \nonumber
				\alpha(y_{k+1}) \leq \alpha(x)+\epsilon_k,\quad \AA y_{k+1}\in M_k(x),\AA x, \AA k.
			\end{equation}
		\end{enumerate}
\end{enumerate}
Then all the cluster points of $\{x_k\}$ lie in $\Gamma$, and
$\EE \bar{x}\in \Gamma$, such that $\alpha(x_k)$ converges to $\alpha(\bar{x})$.
\end{lemma}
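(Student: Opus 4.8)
The plan is to imitate, essentially line for line, the proof of Lemma~\ref{lem:Zang}, the only genuinely new ingredient being a device that recovers the monotonicity destroyed by the errors. Along the iterates, condition~2(b) applied with $x=x_k$ and $y_{k+1}=x_{k+1}\in M_k(x_k)$ gives $\alpha(x_{k+1})\le\alpha(x_k)+\epsilon_k$, so $\{\alpha(x_k)\}$ need no longer be non-increasing; but if we set $r_k:=\sum_{j\ge k}\epsilon_j$ (finite, and $r_k\downarrow 0$, because $\{\epsilon_k\}\in l^1$) and $a_k:=\alpha(x_k)+r_k$, then $a_{k+1}\le a_k$. Since $\alpha$ is lsc on the compact set $K$ it is bounded below there, so $\{a_k\}$ is non-increasing and bounded below, hence $a_k\to L$ for some $L\in\R$, and consequently $\alpha(x_k)=a_k-r_k\to L$ as well. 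This $\{a_k\}$ will play throughout the role that the monotone $\{G(x_k)\}$ plays in Lemma~\ref{lem:Zang}.

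The core step is to prove that \emph{every cluster point $c$ of $\{x_k\}$ satisfies $\alpha(c)=L$}; cluster points exist because $K$ is compact. Given a subsequence $x_{n_k}\to c$, lower semicontinuity only delivers $\alpha(c)\le\liminf_k\alpha(x_{n_k})=L$, and the opposite inequality is where I would invoke Lemma~3 (the statement that an lsc function is sequentially continuous along a convergent sequence whose values are non-increasing): apply it on the product space $\R^D\times\R$ to the lsc function $\tilde\alpha(x,t):=\alpha(x)+t$ and to the points $(x_{n_k},r_{n_k})\to(c,0)$. Since $\tilde\alpha(x_{n_k},r_{n_k})=a_{n_k}$ is non-increasing, Lemma~3 yields $a_{n_k}\to\tilde\alpha(c,0)=\alpha(c)$, and comparison with $a_{n_k}\to L$ gives $\alpha(c)=L$. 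Note that this claim uses neither condition~2(a) nor membership in $\Gamma$, so it holds for \emph{all} cluster points.

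For the remaining assertions I would argue exactly as in step~(2) of the proof of Lemma~\ref{lem:Zang}. If some cluster point $c\notin\Gamma$, pick $x_{n_k}\to c$ and put $y_k:=x_{n_k+1}\in M_{n_k}(x_{n_k})$; these lie in $K$, so $y_{k_l}\to\bar x$ along a further subsequence. Because $y_{k_l}\in M_{n_{k_l}}(x_{n_{k_l}})$ with $x_{n_{k_l}}\to c$ and $y_{k_l}\to\bar x$, the outer-semicontinuous convergence of $\{M_k\}$ to $M$ --- which, through its $\limsup$ formulation, may be read along this index-shifted subsequence --- gives $\bar x\in M(c)$; then condition~2(a), available since $c\notin\Gamma$, forces $\alpha(\bar x)<\alpha(c)$. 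But $\bar x$ is itself a cluster point of $\{x_k\}$, being the limit of the subsequence $\{x_{n_{k_l}+1}\}$, so the claim of the previous paragraph gives $\alpha(\bar x)=L=\alpha(c)$, a contradiction. Hence every cluster point lies in $\Gamma$; choosing any such cluster point for $\bar x$ then yields $\bar x\in\Gamma$ with $\alpha(x_k)\to L=\alpha(\bar x)$, the second conclusion.

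The step I expect to be the main obstacle is precisely the equality $\alpha(c)=L$: lower semicontinuity, which is all that is assumed of $\alpha$, supplies only $\alpha(c)\le L$, and the reverse inequality is where the iteration structure has to be fed in --- here through the monotonicity of $\{a_k\}$ together with the lemma on lsc functions along non-increasing sequences, repackaged on $\R^D\times\R$ via $\tilde\alpha$ so that the errors $r_k$ are absorbed. A secondary, purely technical, point is to confirm that the outer-semicontinuous convergence of $\{M_k\}$ may legitimately be invoked along the shifted subsequence $\{M_{n_{k_l}}\}$, which is immediate from the $\limsup$ (graphical) form of that notion. Everything else --- existence of cluster points, boundedness below of $\alpha$ on $K$, and the bookkeeping with the tail sums $r_k$ --- is routine.
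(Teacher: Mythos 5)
Your proposal is correct and follows essentially the same route as the paper's proof: the tail sums $r_k=\sum_{j\ge k}\epsilon_j$ restore monotonicity of $\alpha(x_k)+r_k$, the lsc-along-monotone-values argument (Lemma 3) identifies $\alpha$ at every cluster point with the common limit $L$, and outer semicontinuous convergence of $\{M_k\}$ together with condition 2(a) yields the contradiction when a cluster point lies outside $\Gamma$. The only cosmetic difference is that you lift to $\R^D\times\R$ via $\tilde\alpha(x,t)=\alpha(x)+t$ so as to quote Lemma 3 verbatim, whereas the paper redoes the corresponding $\liminf$ computation inline.
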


\begin{proof}
(1) Set $r_k=\sum_{j\geq k} \epsilon_j$, then $r_k\to 0$, and
\begin{equation} \nonumber
	\alpha(x_{k+1})+r_{k+1} \leq \alpha(x_k)+\epsilon_k+r_{k+1} = \alpha(x_k) + r_k.
\end{equation}
Thus $\{\alpha(x_k)+r_k\}$ is non-increasing.\\

(2) Let $x^*$ be a cluster point of $\{x_k\}$, and then
there exists a subsequence $\{x_{n_k}\}$, such that $x_{n_k}\to x^*$.
Since $\alpha$ is lsc, we have
\begin{equation} \nonumber
	\alpha(x^*)=\liminf_{k} \alpha(x_{n_k}) = \liminf_{k} \left( \alpha(x_{n_k}) + r_{n_k} \right)
	= \lim_{k} \left( \alpha(x_{n_k}) + r_{n_k} \right) = \lim_{k} \alpha(x_{n_k}).
\end{equation}
The second equality holds because
\begin{equation} \nonumber
	\liminf_{k} \alpha(x_{n_k})\leq
	\liminf_{k} \left( \alpha(x_{n_k}) + r_{n_k} \right)
	\leq \liminf_{k} \alpha(x_{n_k}) + \limsup_{k} r_{n_k} = \liminf_{k} \alpha(x_{n_k}).
\end{equation}
And we can prove in the same way as in (1) of Lemma \ref{lem:Zang} that
\begin{equation} \nonumber
	\lim_{n} \alpha(x_n) = \alpha(x^*).
\end{equation}
(3) We need to show that $x^*\in \Gamma$. Suppose not, take
\begin{equation} \nonumber
	y_k = x_{n_k+1}\in M_{n_k}(x_{n_k}),
\end{equation}
Due to the compactness of $K$, there is a subsequence $\{y_{k_l}\}$ of $\{y_k\}$,
such that $\EE \bar{x}\in K$, $y_{k_l}\to \bar{x}$.
We can argue in the same way as in (2) to show that $\alpha(y_{k_l})\to \alpha(\bar{x})$.
Since
\begin{equation} \nonumber
	y_{k_l}= x_{n_{k_l}+1}\in M_{n_{k_l}}(x_{n_{k_l}}),
\end{equation}
and $M_k$ converges outer semicontinuously to $M$, we have $\bar{x}\in M(x^*)$.
Thus
\begin{equation} \nonumber
	\alpha(\bar{x})<\alpha(x^*) = \lim_{n} \alpha(x_n) = \lim_{l} \alpha(y_{k_l})= \alpha(\bar{x}),
\end{equation}
a contradiction.

The proof is then completed.
\end{proof}
\bigskip

Now we can prove another lemma using above theoretical result.

\begin{lemma} \label{thm:mmerr}
	Let $F:\mathbb{R}^D\to \overline{\mathbb{R}}$ be the objective of MM algorithm.
	Suppose that $F$ is lsc and level-bounded,
	and that the surrogate function at $w^*$ is $U(\cdot|w^*)$.
	In addition, suppose
	$U(\cdot|\cdot)$ is lsc as a function defined on $\R^{2D}$ whose subgradient satisfies
	\begin{equation} \nonumber
		\partial U(w|w) \subset \partial F(w)  ,\quad \AA w\in \R^D,
	\end{equation}
	where $\partial U(w|w^*)$ is the partial subdifferential with respect to $w$.
	Then for any initial parameter $w^0$,
	all the cluster points of the sequence $\{w^k\}$ produced by MM algorithm "with errors"
	are still critical points of $F$.
	
\end{lemma}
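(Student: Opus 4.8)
The plan is to deduce Lemma~\ref{thm:mmerr} from Lemma~\ref{thm:zangpr} exactly as Theorem~\ref{thm:mm} was deduced from Lemma~\ref{lem:Zang}, letting the inexactness of the minimization be absorbed by the $l^1$ error sequence that is already built into Lemma~\ref{thm:zangpr}. I would take $\Gamma$ to be the set of critical points of $F$, let $M(w^*)=\arg\min_w U(w|w^*)$ be the exact MM map, and introduce the inexact maps
\[
	M_k(w^*)=\Bigl\{\, u\in\R^D : U(u|w^*)\le \min U(\cdot|w^*)+\epsilon_k \,\Bigr\}.
\]
The ``with errors'' rule $U(w^{k+1}|w^k)\le \min U(\cdot|w^k)+\epsilon_k$ says precisely that $w^{k+1}\in M_k(w^k)$ for every $k$, so $\{w^k\}$ is a sequence produced by $\{M_k\}$ in the sense of Lemma~\ref{thm:zangpr}, and it remains to verify its hypotheses.

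First I would dispose of the conditions that involve only the values of $F$. Using that a MM surrogate majorizes $F$ (so $F(w)\le U(w|w^*)$ for all $w$, with equality $U(w^*|w^*)=F(w^*)$ at $w=w^*$, exactly the descending property invoked in the proof of Theorem~\ref{thm:mm}), any $u\in M_k(w^*)$ satisfies $F(u)\le U(u|w^*)\le \min U(\cdot|w^*)+\epsilon_k\le U(w^*|w^*)+\epsilon_k=F(w^*)+\epsilon_k$. This is condition 2(b) of Lemma~\ref{thm:zangpr} with $\alpha=F$ and the same $\{\epsilon_k\}\in l^1$; iterating it from $w^0$ gives $F(w^k)\le F(w^0)+\sum_j\epsilon_j=:C<\infty$, and since $F$ is lsc and level-bounded, $K:=\text{lev}_{F\le C}$ is compact and contains every $w^k$, which is condition 1. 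For condition 2(a), if $x\notin\Gamma$ then $0\notin\partial F(x)\supset\partial U(x|x)$, so by the generalized Fermat rule \cite[10.1]{rockafellar2009variational} $x$ is not a minimizer of $U(\cdot|x)$; hence for every $y\in M(x)$ we obtain $F(y)\le U(y|x)=\min U(\cdot|x)<U(x|x)=F(x)$.

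The one delicate point, which I expect to be the crux, is that $\{M_k\}$ converges outer semicontinuously to $M$ in the sense of the definition preceding Lemma~\ref{thm:zangpr}. Given $x_k\to\bar x$, $v_k\in M_k(x_k)$ and $v_k\to\bar v$, for each fixed $w$ the inclusion gives $U(v_k|x_k)\le U(w|x_k)+\epsilon_k$; taking $\liminf_k$, using lower semicontinuity of $U$ on $\R^{2D}$ on the left and $\epsilon_k\to0$ on the right, yields $U(\bar v|\bar x)\le\liminf_k U(v_k|x_k)\le\liminf_k U(w|x_k)$. To close the argument one needs $\liminf_k U(w|x_k)\le U(w|\bar x)$, so that $U(\bar v|\bar x)\le U(w|\bar x)$ for every $w$, i.e. $\bar v\in M(\bar x)$; this is the same passage to the limit already used for the exact map in the proof of Theorem~\ref{thm:mm}, and it is legitimate in the SPL setting because $w^*\mapsto U(w|w^*)=\f_\lambda(w)+Q(w|w^*)$ is continuous in $w^*$ for fixed $w$ (by continuity of $v_\lambda^*$ and the continuously differentiable dependence of each $l_i$), so the $\liminf$ is in fact the limit $U(w|\bar x)$. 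Once the outer semicontinuous convergence $M_k\to M$ is established, all hypotheses of Lemma~\ref{thm:zangpr} are met, and it delivers that every cluster point of $\{w^k\}$ lies in $\Gamma$, i.e. is a critical point of $F$ — which is the assertion of Lemma~\ref{thm:mmerr}.
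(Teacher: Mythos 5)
Your proposal is correct and follows essentially the same route as the paper's own proof: the same choice of $\Gamma$, $\alpha=F$, exact map $M$ and inexact maps $M_k$, the same verification of conditions 1, 2(a), 2(b), and the same passage to the limit for outer semicontinuous convergence of $M_k$ to $M$. If anything you are slightly more careful than the paper at the one delicate step, correctly observing that $\liminf_k U(w|x_k)\le U(w|\bar x)$ does not follow from lower semicontinuity alone and must be supplied by continuity of $w^*\mapsto U(w|w^*)$, which holds in the SPL setting.
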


\begin{proof}
	We prove by a direct application of Lemma \ref{thm:zangpr}.
	Let $\Gamma$ be the set consisting of all the critical points of $F$, $\alpha=F$,
	and
	\begin{equation} \nonumber
		M_k(w^*) = \{ w : U(w|w^*) \leq \min U(\cdot|w^*) + \epsilon_k \}.
	\end{equation}
	$M$ is the same as before:
	\begin{equation} \nonumber
		M(w^*) = \arg \min_{w} U(w|w^*).
	\end{equation}
	We first need to show that $M_k$ converges outer semicontinuously to $M$.
	Suppose $w^k\to \bar{w},v^k\in M_k(w^k),v^k\to \bar{v}$, and then
	$\AA w$,
	\begin{equation} \nonumber
		U(v^k|w^k)\leq \min U(\cdot|w^k) + \epsilon_k \leq U(w|w^k) + \epsilon_k.
	\end{equation}
	Taking infimal limit on both sides when $k\to \infty$, we have
	\begin{equation} \nonumber
		U(\bar{v}|\bar{w}) = \liminf_{k}U(v^k|w^k) \leq \liminf_{k}\left( U(w|w^k) + \epsilon_k \right)
		\leq \liminf_{k} U(w|w^k) + \limsup_{k} \epsilon_k = U(w|\bar{w}),
	\end{equation}
	which means $\bar{v}\in M(\bar{w})$. Thus $M_k$ converges outer semicontinuously to $M$.\\

	Condition 1: $F$ is level-bounded, thus
	\begin{equation} \nonumber
		K(w^0) = \{w: F(w)\leq F(w^0)+\sum_k \epsilon_k \}
	\end{equation}
	is bounded. Since $F$ is also lsc, $K(w^0)$ is closed and hence compact.
	By (1) of Lemma \ref{thm:zangpr}, $w^k$ all lie in $K(w^0)$.\\

	Condition 2a: If $w^*\notin \Gamma$, then
	\begin{equation} \nonumber
		0\notin \partial F(w^*) \supset \partial U(w^*|w^*).
	\end{equation}
	By the generalized Fermat theorem,
	$w^*$ is not a minima of $U(\cdot|w^*)$, and hence $w^*\notin M(w^*)$.
	It follows that $\AA w\in M(w^*)$,
	\begin{equation} \nonumber
		F(w) \leq U(w|w^*) < U(w^*|w^*) = F(w^*).
	\end{equation}
	
	Condition 2b: Let $v\in M_k(w)$, and then
	\begin{equation} \nonumber
		U(v|w) \leq \min U(\cdot|w) + \epsilon_k.
	\end{equation}
	Thus,
	\begin{equation} \nonumber
		F(v) \leq U(v|w)\leq \min U(\cdot|w) + \epsilon_k \leq U(w|w) + \epsilon_k  = F(w) + \epsilon_k.
	\end{equation}
	Therefore, all the conditions of Lemma \ref{thm:zangpr} are satisfied
	and we have finished the proof.
\end{proof}
\bigskip

Just like the proof of Theorem 2, Theorem~3 can be easily proved by directly utilizing the results of the above Lemma \ref{thm:mmerr}. We omit the proof here.

\end{document}